\documentclass[brazil,english, conference, american, harvard]{sbatex}
\usepackage[utf8]{inputenc}
\usepackage[T1]{fontenc}
\makeatletter
\def\verbatim@font{\normalfont\ttfamily\footnotesize}
\makeatother
\usepackage{amsmath}
\usepackage{graphicx}
\usepackage{float}
\usepackage{amssymb}
\usepackage{subfigure}
\usepackage{ae}
\usepackage{epstopdf}
\usepackage{algpseudocode}
\usepackage{algorithm}
\usepackage{color}
\usepackage[hidelinks]{hyperref}

\newcommand{\blfootnote}[1]{%
  \begingroup
  \renewcommand{\thefootnote}{}%
  \footnotetext{#1}%
  \addtocounter{footnote}{-1}%
  \endgroup
}

\newtheorem{lema}{Lemma}

\begin{document}

\title{AN OPTIMAL ALGORITHM FOR CHANGING FROM LONGITUDINAL TO LATITUDINAL FORMATION OF AUTONOMOUS AIRCRAFT SQUADRONS}

\author[1]{Paulo André S. Giacomin,}
          {pasgiacomin@uesc.br, }

\address{Universidade Estadual de Santa Cruz\\
Departamento de Ciências Exatas e Tecnológicas - Área de Informática\\
Ilhéus-BA, Brasil}

\address{Instituto Tecnológico de Aeronáutica\\
Departamento de Sistemas e Controle - Divisão de Engenharia Eletrônica \\
São José dos Campos-SP, Brasil}

\author[2]{Elder M. Hemerly}
          {hemerly@ita.br}

\twocolumn[

\maketitle

\selectlanguage{english}

\begin{abstract}
This work presents an algorithm for changing from longitudinal to 
latitudinal formation of autonomous aircraft squadrons. The maneuvers are defined 
dynamically by using a predefined set of 3D basic maneuvers. 
This  formation change  is necessary when the squadron has to perform
tasks  demanding  both formations, such as landing,  lift-off,   obstacle avoidance and georeferencing.
The time complexity analysis of the transformation algorithm 
reveals that its efficiency is optimal. 
Simulations  show that the  formation change  is  done  without collision,  and   the proof of  correctness  ensures its latitudinal formation features.
\end{abstract}

\keywords{Algorithms, Intelligent Agents, Multi-agents Systems, Robotics, Simulation }

\selectlanguage{brazil}

\begin{abstract}
Este trabalho apresenta um algoritmo de mudança de formação em longitude
para formação latitudinal de esquadrilhas de aeronaves autônomas. As manobras são definidas
dinamicamente utilizando-se um conjunto pré-definido de manobras 3D básicas. 
Esta mudança de formação é necessária quando a esquadrilha tem que desenvolver tarefas que
demandam ambas as formações, tais como 
aterrissagem, decolagem, desvio de obstáculos e georreferenciamento.
A análise 
de complexidade de tempo do algoritmo de transformação revela que sua eficiência é ótima.
As simulações mostram que a mudança de formação é feita sem colisão,  
e a prova de correção  assegura suas características de formação latitudinal.  
\end{abstract}

\keywords{Algoritmos, Agentes Inteligentes, Sistemas Multiagentes, Robótica, Simulação }
]
\selectlanguage{english}

\section{Introduction}

\blfootnote{ Published in the Proceeding Series of the Brazilian Society of Computational and Applied Mathematics (DINCON 2013); the scientific content remains unchanged. Licence: CC BY-NC-ND. \href{https://doi.org/10.5281/zenodo.21473681}{DOI: 10.5281/zenodo.21473681.}  }

Recently, it has been possible to see a growing interest in the development of autonomous 
aircraft that can cooperate with several organizations in the solution of 
public security problems. Police and army are some examples. 
This happens because the autonomous agents can deal with
unhealthy or dangerous problems, like monitoring of areas with armed conflict, 
violence monitoring, fires, deforestation monitoring and inspection of 
nuclear areas, without to expose humans to the risks. 

When several agents are used in the solution of these problems, some advantages arise:
a) it is possible to make a better use of sensors, since they can be shared by the network, b) 
distributed systems are usually more robust than the centralized ones. 
c) autonomous aircraft typically
fly at low heights, hence good quality images can be captured, and d) when autonomous aircraft squadrons are used in georeferencing, 
the visual field of the cameras increases,since the
 images captured by each aircraft can be mosaicked. 

The longitudinal formation presented in Figure \ref{fig:longitudinal-formation}
is necessary when the squadron is landing and doing  lift-off,   and
can also be used for collision avoidance: if the first aircraft succeeds in avoiding the collision, all other
aircraft in the squadron can also avoid the obstacle by using the same behavior. 

The latitudinal formation shown in Figure \ref{fig:latitudinal-formation} 
is an attractive formation to deal with the georeferencing problem,
since better area coverage can be achieved. Therefore, if the same squadron needs to perform
obstacle avoidance, landing,  lift-off  and georeferencing, it will eventually be necessary
for the squadron to change between its longitudinal and latitudinal formations. 

\begin{figure}[h]
 \centering
  \includegraphics[bb=0 0 148 178,scale=1]{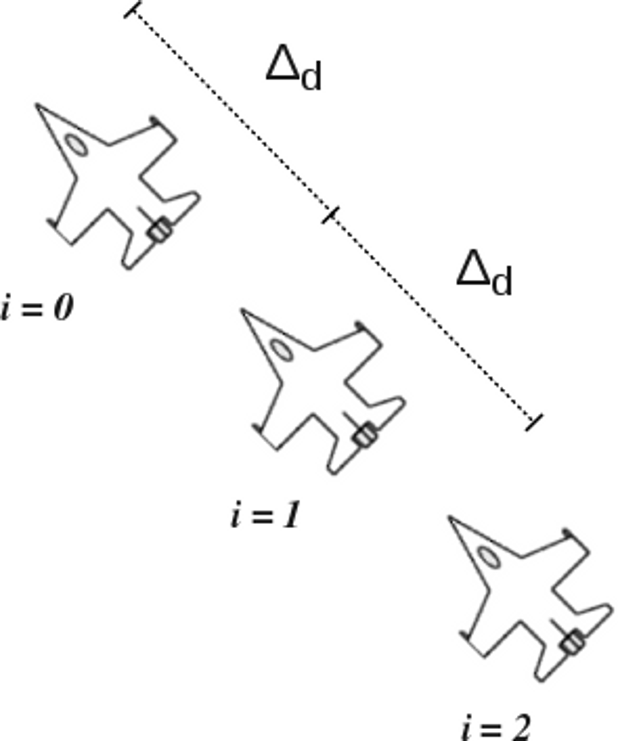}
 \caption{The longitudinal formation can be used for collision avoidance, landing and  lift-off.  }
 \label{fig:longitudinal-formation}
\end{figure}

Therefore, the problem considered in this work is: assuming that 
there are $n$ aircraft in longitudinal formation, equally spaced by 
$\Delta_d$ meters, we want to develop an algorithm that changes 
the squadron to the latitudinal formation, where the aircraft 
will also be equally spaced by $\Delta_d$ meters, without collision among them.

\begin{figure}[h]
 \centering
 \includegraphics[bb=0 0 163 163,scale=1]{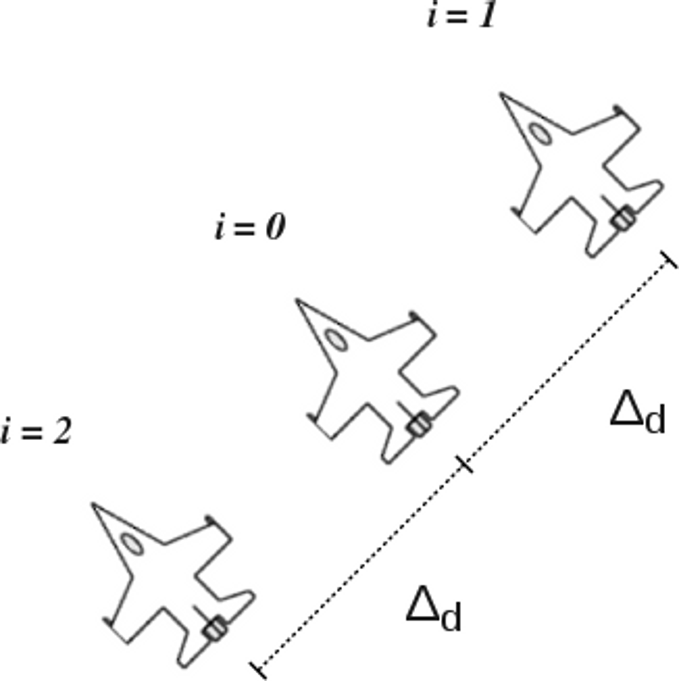}
 \caption{It is possible to achieve good area coverage with the latitudinal formation.}
 \label{fig:latitudinal-formation}
\end{figure}

This problem is different of the one considered in \cite{Giacomin-sbai}, 
since the one treated here deals with transition from longitudinal to
latitudinal formation, but the one presented in \cite{Giacomin-sbai} 
deals with changing from latitudinal to longitudinal formation.
Therefore, although apparently similar, the algorithms, sequence of maneuvers, 
simulation results and theoretical analysis are quite different.  

The problem considered here involves control, trajectory generation and aircraft formation. 
Control techniques, like the reinforcement learning \cite{Santos} do not
focus on the changing of formation.
Trajectory generation  done  by using optimization algorithms are
usually found in the literature. Some examples are the works of 
\cite{Cheng} and \cite{Xu}, but works like these do not 
focus on well defined geometric formations. Other techniques are also used in the trajectory
definition, like the geometric moments, controlled via nonlinear gradient \cite{Morbidi}, 
the modern matrix analysis used by \cite{Coker}, the combination of the hybrid navigation 
architecture with the local obstacle avoidance methodology and with the model
predictive control \cite{Jansen}, the navigation functions \cite{Roussos}, and the
variation of rapidly-exploring random trees \cite{Neto}. But
they also do not consider the changing between well defined geometric formations during 
the flight. The formation problem is treated by the nonlinear model predictive control approach
of \cite{Chao} and by the leader-follower approach of \cite{You} and \cite{Gu}, 
but changing between well defined geometric formations is not considered.   

The formation reconfiguration is 
studied by \cite{Knoll}, where the aircraft move their position 
inside a formation, and the same formation is considered before and after 
the reconfiguration. The autonomous decision-making architecture \cite{Venkataramanan}
also considers this problem, but neither \cite{Knoll} nor \cite{Venkataramanan} consider 
the transition between different formations. 
After exhaustive literature searching,  no algorithm was found  dealing  with the problem considered here.
Thus, the main contributions of this work are:

\begin{enumerate}

\item An algorithm, adapted from \cite{Giacomin-sbai}, for changing from the longitudinal to the latitudinal
formation of the squadron. The time complexity analysis of the proposed algorithm demonstrates  that  its efficiency is optimal.

\item  The simulation results  considering  a case study  in  which the aircraft do not collide, and a proof of  correctness  of the proposed algorithm  that  ensures its latitudinal formation features.

\end{enumerate}

These contributions are described in the next sections, 
namely: \ref{sec:methodology} - Methodology, \ref{sec:simulation} - Simulation Results, 
\ref{sec:theoretical-analysis} - Theoretical Analysis and \ref{sec:conclusion} - Conclusion. 

\section{Methodology}
\label{sec:methodology}

A set of maneuvers is specified  so  that the proposed algorithm
can use it to create references to be followed by the aircraft.

\subsection{The Maneuvers Scheme}

Two 3D basic maneuvers are employed by the proposed algorithm, 
as described in \cite{Giacomin-cobem}.
Here they are called C and FW to implement the  \textit{to-turn}  and  \textit{go-forward}  maneuvers, respectively.
The  \textit{to-turn}  interface is C($P$, $\varphi$, $\beta$, $\theta$, $\alpha$, $vel$, $r$, $T$),
and the  \textit{go-forward}  interface is FW($P$, $\varphi$, $\beta$, $vel$, $d$, $T$),
where $P$ is the initial position $(x_0, y_0, h_0)$, $T$ is a vector time to be filled 
with time intervals, $\varphi$ is the angle between the maneuvers and the latitudinal axis 
in counterclockwise direction, $\theta$ is the spin angle around the longitudinal axis, $\beta$
is the angle between the maneuvers and the longitudinal-latitudinal plane, $\alpha$
is the angle of the arc of the  \textit{to-turn}  maneuver with radius $r$, $vel$ is the
velocity of the maneuvers and $d$ is the distance of the  \textit{go-forward}  maneuver.
The angles $\varphi$, $\beta$ and $\theta$ are applied firstly by using the 
coordinate system base $(0, 0, 0)$, and after the maneuver is translated to the  base $(x_0, y_0, h_0)$.

The C and FW algorithms are used by the 
transition algorithm described at Subsection \ref{subsection:the-transition-algorithm}. 

\subsection{The Transition Algorithm}
\label{subsection:the-transition-algorithm}

The transition algorithm is shown in Algorithm \ref{algo:fltl}. It is called 
FLOTLA due to the initial characters that describe its function:
From Longitudinal To Latitudinal formation. It basically performs 
the maneuvers presented in Figure \ref{fig:geometria-trajetoria}, and the equivalent mirrored ones,  with  each aircraft moving to the left and to the right of the squadron, respectively.

\begin{algorithm}
  
  \caption{The transition algorithm}\label{algo:fltl}
  
  \begin{algorithmic}[1]  
   \Procedure{FLOTLA}{$i$, $\Delta_d$, $vel$, $r$, $\varphi$, $\beta$, $P$}

    \State $T \gets \varnothing$ 
    \State $Ref \gets \varnothing$

    \State $side \gets direction(i, N)$

    \State $k \gets 0$

    \While {$k < 4$}

      \If{$|Ref| \neq 0$} \label{linha:inicio_encaixe}
	\State $P_{0} \gets penultimate(Ref)$
	\State $D \gets distance(P_1, P_0)$
	\State $\varphi_2 \gets sin^{-1}(\Delta_y / D)$
	\State $\beta_2 \gets sin^{-1}(\Delta_h / D)$ 

      \Else
	\State $\varphi_2 = \varphi$
        \State $\beta_2 = \beta$
      \EndIf \label{linha:final_encaixe}

      \If{$k = 0$} 
	\If{$side = LEFT$} \label{linha:inicio_primeira_curva}
	  \State $a \gets C(P_1, \varphi_2, \beta_2, 0, \frac{\pi}{4}, vel, r, T)$
	\Else
	  \State $a \gets C(P_1, \varphi_2, \beta_2, \pi, \frac{\pi}{4}, vel, r, T)$
	\EndIf \label{linha:final_primeira_curva}
      \ElsIf{$k = 1$}
	\If{$even(N)$}
	  \State $i_2 \gets \lfloor(N - i - 1) / 2\rfloor$ \label{linha:calculo_delta}
	  \State $\Delta \gets \Delta_d / 2 + i_2 \cdot \Delta_d$
	\Else
	    \If{$i \neq 0 $} \label{linha:calculo_delta_odd_1}
	      \State $\Delta \gets \lfloor(i+1)/2\rfloor \cdot \Delta_d$
	    \Else
	      \State $\Delta \gets 0$
	    \EndIf \label{linha:calculo_delta_odd_2}
	\EndIf

	\State $\Delta_2 \gets \sqrt{2} \cdot (\Delta - 2 \cdot r \cdot (1 - cos(\pi / 4)))$
	\State $a \gets FW(P_1, \varphi_2, \beta_2, vel, \Delta_2, T)$ \label{linha:vai_para_fora}
      \ElsIf{$k = 2$}

	\If{$side = LEFT$} \label{linha:inicio_segunda_curva}
	  \State $\varphi_3 \gets - \varphi_2$
	  \State $a \gets C(P_1, \varphi_3, \beta_2, \pi, \frac{\pi}{4}, vel, r, T)$
	\Else
	  \State $a \gets C(P_1, \varphi_2, \beta_2, 0, \frac{\pi}{4}, vel, r, T)$
	\EndIf \label{linha:final_segunda_curva}

      \Else
	\State $\Delta_m \gets advancing(i, \Delta_d) + k_2$ \label{linha:deslocamento}
	\State $a \gets FW(P_1, \varphi_2, \beta_2, vel, \Delta_m, T)$ \label{linha:manobra_final}
      \EndIf

    \State $Ref \gets Ref \cup a$
    \State $P_1 \gets last(Ref)$ \Comment does not modify Ref
    \State $k \gets k + 1$
    \EndWhile

    \State \textbf{return} $Ref$ 
   \EndProcedure
  \end{algorithmic} 
\end{algorithm}

\begin{algorithm}
  
  \caption{The advancing algorithm}\label{algo:advancing}
  
  \begin{algorithmic}[1]  
   \Procedure{advancing}{$i$, $\Delta_d$}

    \If{$odd(N)$}
      \If{$odd(i)$}
	\State $\Delta_m \gets \Delta_d + 3 \cdot \lfloor i / 2\rfloor \cdot \Delta_d$
      \Else
	\State $\Delta_m \gets 3 \cdot \lfloor i / 2\rfloor \cdot \Delta_d$
      \EndIf
    \Else
      \If{$odd(i)$}
	\State $\Delta_m \gets \Delta_d + 3 \cdot \lfloor (i-1) / 2\rfloor \cdot \Delta_d$
      \Else
	\State $\Delta_m \gets 3 \cdot \lfloor i / 2\rfloor \cdot \Delta_d$
      \EndIf
    \EndIf

    \State \textbf{return} $\Delta_m$ 
   \EndProcedure
  \end{algorithmic} 
\end{algorithm}

\begin{figure}[h]
 \centering
 \includegraphics[bb=0 0 156 184,scale=1]{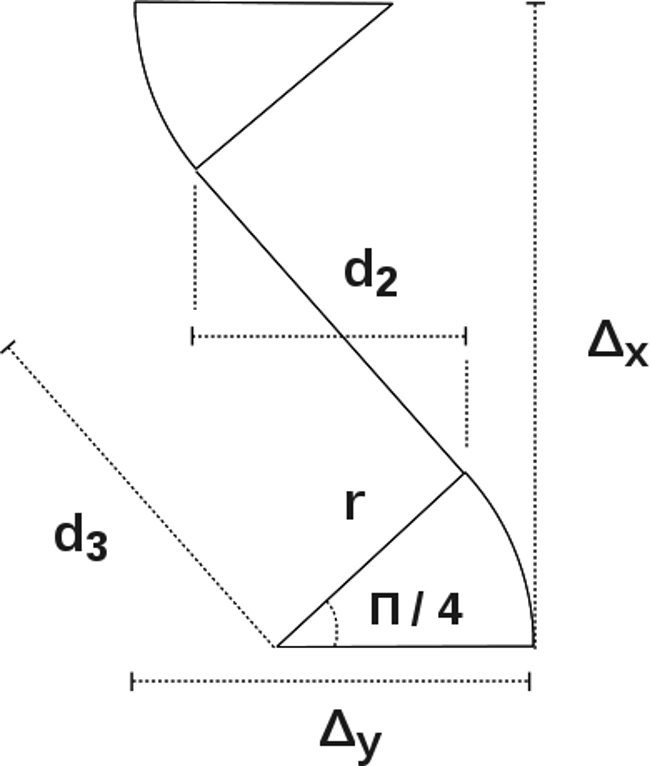}
 \caption{An aircraft goes out from the middle line
when it assumes the latitudinal formation. }
 \label{fig:geometria-trajetoria}
\end{figure}

The interface of the transition algorithm is FLOTLA($i$, $\Delta_d$, $vel$, $r$, $\varphi$, $\beta$, $P$), 
where $P$ is the aircraft initial position $(x_0, y_0, h_0)$,
$\varphi$ and $\beta$ have the same meaning that in the previously described C and FW interfaces,
$r$ is the radius of the  \textit{to-turn}  maneuvers
used by the transition algorithm,
$vel$ is the aircraft airspeed, and 
$i$ is the aircraft index shown in Figures \ref{fig:longitudinal-formation} and \ref{fig:latitudinal-formation}.

Algorithm \ref{algo:fltl} is executed by each aircraft processor  in  parallel.
Each aircraft executes four maneuvers. The values that are determined between
Lines \ref{linha:inicio_encaixe} and \ref{linha:final_encaixe} are used to fit each maneuver with 
the next one. 

Each aircraft goes out from the middle line of the squadron between Lines 
\ref{linha:inicio_primeira_curva} and \ref{linha:final_primeira_curva}. At Line \ref{linha:vai_para_fora}, 
the aircraft increases the diagonal distance from the middle line of the squadron. Between Lines \ref{linha:inicio_segunda_curva}
and \ref{linha:final_segunda_curva}, the aircraft moves to  assume  the latitudinal position.
Finally, for all aircraft, except the first one,
Algorithm \ref{algo:advancing} calculates the forward displacement, called $\Delta_m$, at Line \ref{linha:deslocamento}.
 Afterwards,  the aircraft enter into the latitudinal formation at Line \ref{linha:manobra_final}.

The aircraft model presented  in  Subsection \ref{subsection:model} was used to test the
references created  by  Algorithm \ref{algo:fltl}.

\subsection{The Aircraft Model}
\label{subsection:model}

The simple and well-tested aircraft state space model, see \cite{Anderson} for details, is employed  and  is given by

\begin{equation}
 \label{eq:velocidade}
 \frac{d V}{d t} = g \cdot \left[ \frac{(T - D)}{W} - sin(\gamma) \right]
\end{equation} 
\begin{equation}
 \label{eq:gama}
 \frac{d \gamma}{d t} = \frac{g}{V} \cdot [ n \cdot cos(\mu) - cos(\gamma)]
\end{equation} 
\begin{equation}
 \label{eq:heading}
 \frac{d \chi}{d t} = \frac{g \cdot n \cdot sin(\mu)}{V \cdot cos(\gamma)}
\end{equation} 
\begin{equation}
 \label{eq:x}
 \frac{d x}{d t} = V \cdot cos(\gamma) \cdot cos(\chi)
\end{equation} 
\begin{equation}
 \label{eq:y}
 \frac{d y}{d t} = V \cdot cos(\gamma) \cdot sin(\chi)
\end{equation} 
\begin{equation}
 \label{eq:z}
 \frac{d h}{d t} = V \cdot sin(\gamma)
\end{equation} 
where the state variables are: 
the position variables (x, y, h), 
flight path heading ($\chi$),
flight path angle ($\gamma$), and airspeed (V).

Here, the references created by Algorithm \ref{algo:fltl} are submitted to a control scheme
that  employs  the above aircraft model. It is presented in \cite{Giacomin-cobem}.

\section{Simulation Results}
\label{sec:simulation}

 Algorithm  \ref{algo:fltl} is programmed in parallel by using
the GNU Message Passing Interface (MPI) Compiler and
the C++ programming language.
The software Gnuplot is used for plotting the graphics.   

The initial condition for all aircraft  is:  $\Delta_d = 18,300.00$ meters, assuming the longitudinal formation,
airspeed: $30.5$ m/s, and height: $3,050.00$ meters.
A radius of  $4,575.00$ meters is used by Algorithm \ref{algo:fltl} to create the references,
and different airspeeds are calculated for each aircraft  that  allow all aircraft to  assume the  latitudinal formation at the same time instant. The results are shown in Figures \ref{fig:squadron-references}, \ref{fig:squadron-trajectories},
\ref{fig:velocity-dinamic} and \ref{fig:heading}. 
The Runge-Kutta-4 algorithm is used in the simulation. 

\begin{figure}[h]
 \centering
  \includegraphics[bb=0 0 219 134,scale=1]{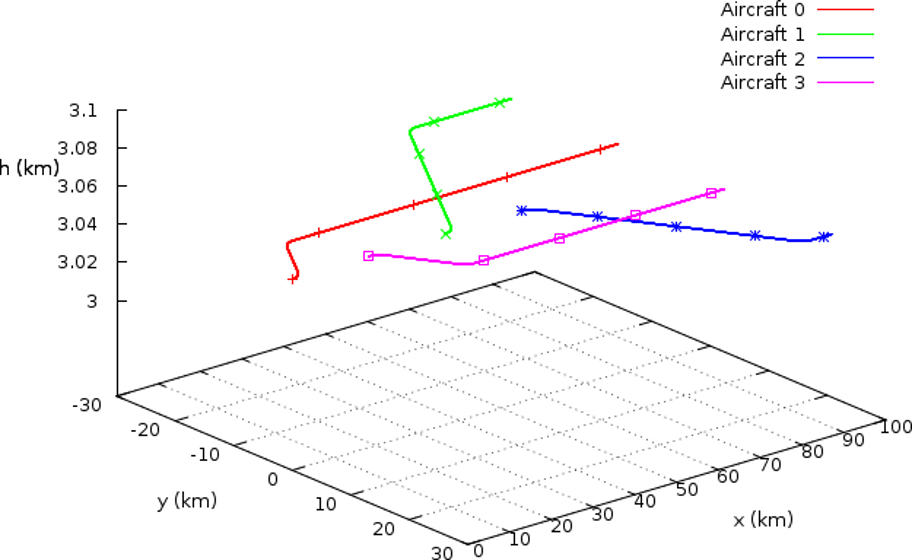}
 \caption{The aircraft references are marked for crossing analysis.}
 \label{fig:squadron-references}
\end{figure}

\begin{figure}[h]
 \centering
 \includegraphics[bb=0 0 219 134,scale=1]{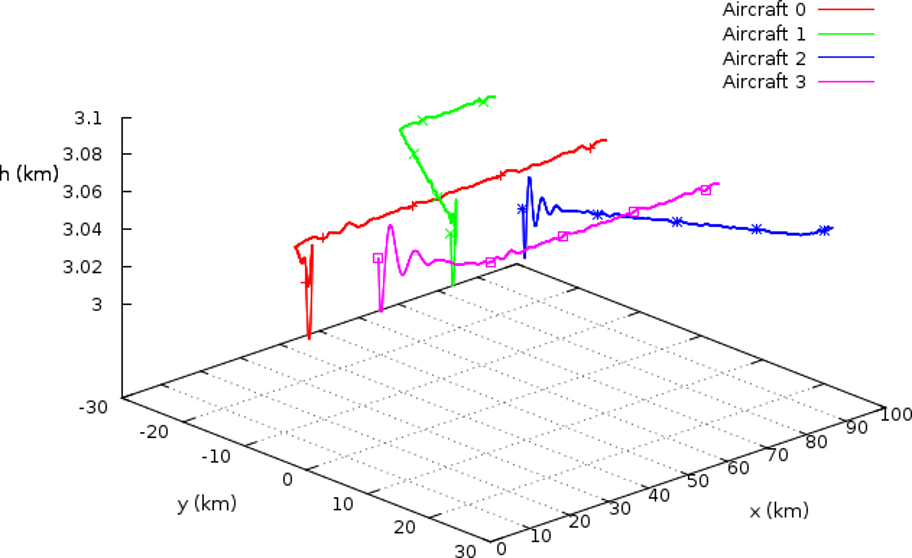}
 \caption{The aircraft trajectories succeed in following the references. }
 \label{fig:squadron-trajectories}
\end{figure}

The marks shown in Figures \ref{fig:squadron-references} and \ref{fig:squadron-trajectories} are
used to analyze the aircraft crossing. 
When the aircraft  numbered  zero and  three  are flying over the third mark, the aircraft  numbered  one and two are ahead. A simple verification algorithm did show that
the minimum distance between aircraft $0$ and $1$ and
between aircraft $2$ and $3$ was $18.3$ km and $16.8$ km, respectively. 
Therefore, there is no collision. 

The simulation is  done  by considering a noise of $0.5\%$ for airspeed, and a noise of
$0.25\%$ for heading and gamma angles. From Figures \ref{fig:squadron-references}, \ref{fig:squadron-trajectories},
\ref{fig:velocity-dinamic} and \ref{fig:heading},  it  is possible to conclude that Algorithm \ref{algo:fltl}
creates the references correctly and that the aircraft succeed in following the references.

\begin{figure}[h]
 \centering
 \includegraphics[bb=0 0 211 139,scale=1]{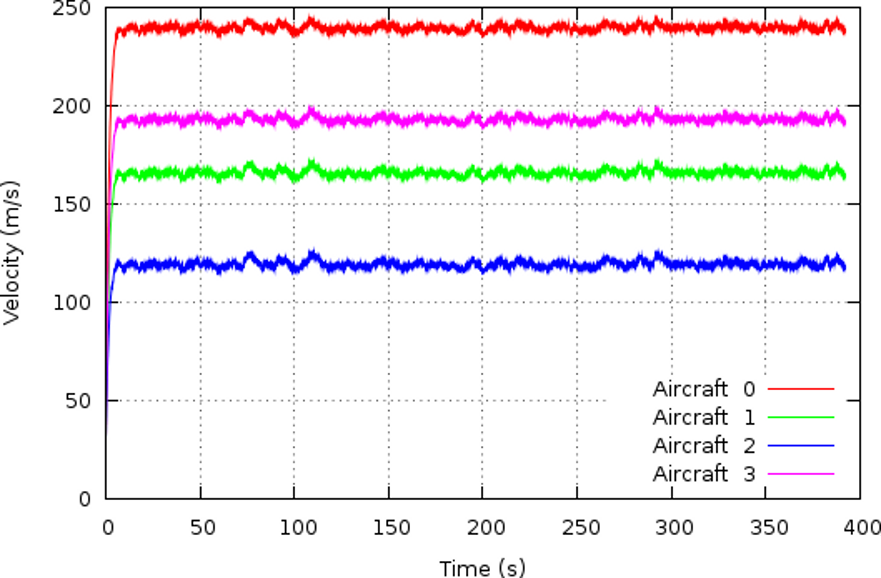}
 \caption{All aircraft velocities are  stable.  }
 \label{fig:velocity-dinamic}
\end{figure}

\begin{figure}[h]
 \centering
 \includegraphics[bb=0 0 211 139,scale=1]{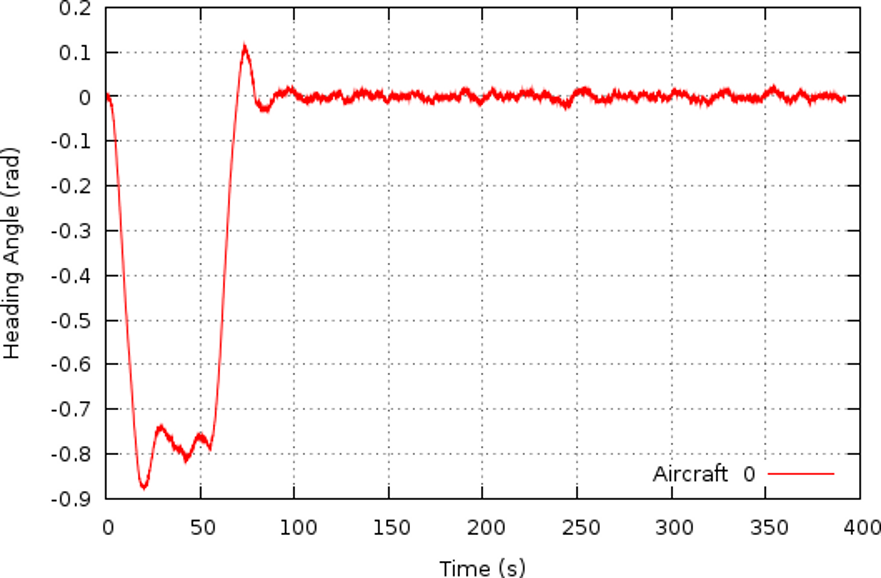}
 \caption{First aircraft heading angle switches between two values. }
 \label{fig:heading}
\end{figure}

It remains to prove the  correctness of  Algorithm \ref{algo:fltl} and to analyze its time complexity.
This is  done  in Section \ref{sec:theoretical-analysis}.

\section{Theoretical Analysis}
\label{sec:theoretical-analysis}

Firstly, the time complexity analysis of Algorithm \ref{algo:fltl}  is presented, followed by its proof of correctness.

\subsection{The Time Complexity Analysis}

The  \textit{to-turn}  and  \textit{go-forward}  functions are executed over a fixed
number of maneuvers, and all other functions used in Algorithm \ref{algo:fltl}
have  constant-time  complexity.
Let $Q = \{q_0, \cdots, q_{|Q|-1}\}$ be the 
set of maneuvers, and let $R = \{r_0, \cdots, r_{|Q|-1}\}$ be the
set of references, with $r_i = ref(q_i)$, and the function \textit{ref}
be implemented by using the functions C or FW in the Algorithm \ref{algo:fltl}. 
Then, $|r_i|$ is the  number of  references
for each maneuver $q_i$,
and $R$ is a set of sets. The time complexity of Algorithm \ref{algo:fltl} is

\[O(|r_0| + |r_1| + |r_2| + |r_3|)\]
\[m = |r_0| + \cdots + |r_3| \]
\[O(m)\]
where $m$ represents the total  number of  time steps for each aircraft trajectory.
\textit{Since this is also the lower bound for the problem,  one concludes that the  FLOTLA
algorithm has optimal time complexity \cite{Cormen}}. 

\subsection{The Proof of Correction}

The proof of correction takes into account that 
Figure \ref{fig:geometria-trajetoria} describes basically the maneuvers that are executed by
Algorithm \ref{algo:fltl}, except the last one,  which  is created by Algorithm \ref{algo:advancing}.

\begin{theorem}
\label{theorem:correcao}
Let assume a set of 
aircraft initially flying in longitudinal formation, with
each aircraft equally spaced by $\Delta_d$ from its neighbors. 
Then, if the FLOTLA algorithm is executed by each aircraft, 
the latitudinal formation is achieved, with spacing $\Delta_d$ between its neighbors. 
\end{theorem}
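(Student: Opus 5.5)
The proof tracks each aircraft's lateral and longitudinal displacement through the four maneuvers and then compares final positions. Fix a frame with the longitudinal axis along the initial flight direction and the latitudinal axis perpendicular to it, in the plane $\beta=0$ (the $\beta\neq 0$ case only adds the same tilt to every aircraft, so the relative geometry is unchanged). Aircraft $i$ starts at longitudinal coordinate $-i\Delta_d$ and lateral coordinate $0$. For each $i$ I will compute the net lateral displacement $L_i$ and the net forward displacement $F_i$ produced by the sequence C, FW, C, FW of Algorithm \ref{algo:fltl}. The claim then reduces to two facts. First, $-i\Delta_d + F_i$ is the same for all $i$, so all aircraft end on a common latitudinal line. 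Second, the multiset $\{L_i\}$, sorted, consists of consecutive values differing by $\Delta_d$.

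\textbf{Lateral displacement.} Use the geometry of Figure \ref{fig:geometria-trajetoria}. The first turn is an arc of angle $\pi/4$ and radius $r$, and it displaces the aircraft laterally by $r(1-\cos(\pi/4))$. The second turn restores the heading and contributes the same amount again. The diagonal leg FW at $45^\circ$, of length $\Delta_2=\sqrt2(\Delta-2r(1-\cos(\pi/4)))$, contributes $\Delta_2/\sqrt2=\Delta-2r(1-\cos(\pi/4))$. Summing the three contributions gives $|L_i|=\Delta$ exactly, with sign fixed by $side=direction(i,N)$. I then split into cases by the parity of $N$. For odd $N$, Lines \ref{linha:calculo_delta_odd_1}--\ref{linha:calculo_delta_odd_2} give $\Delta=\lfloor (i+1)/2\rfloor\Delta_d$. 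With odd and even indices sent to opposite sides, this yields the lateral positions $0,\pm\Delta_d,\pm2\Delta_d,\dots$, each occupied once. For even $N$, Line \ref{linha:calculo_delta} gives $\Delta=\Delta_d/2+i_2\Delta_d$, which yields positions $\pm\Delta_d/2,\pm3\Delta_d/2,\dots$, again with spacing $\Delta_d$. The point to check here is that $direction$ and the index map $i_2$ send exactly one aircraft to each slot. I would verify this by listing which $i$ map to each side.

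\textbf{Longitudinal alignment.} This is the main obstacle. The forward progress of the first three maneuvers is $2r\sin(\pi/4)+\Delta_2/\sqrt2 = 2r\sin(\pi/4)+\Delta-2r(1-\cos(\pi/4))$. It depends on $i$ through $\Delta$, and the final leg adds $\Delta_m=advancing(i,\Delta_d)+k_2$ from Algorithm \ref{algo:advancing}. I need $-i\Delta_d+\Delta(i)+advancing(i,\Delta_d)$ to be independent of $i$. The common constant $k_2$ absorbs the $r$-terms and sets the line on which the formation forms. I would verify this identity case by case (parity of $N$, parity of $i$) by substituting the formulas for $\Delta$ and $\Delta_m$.

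The difficulty is that this identity may not hold literally for the position-based coordinates. If so, I would reinterpret it in terms of arrival time: each aircraft flies at its own computed airspeed so that all aircraft arrive simultaneously, as stated in Section \ref{sec:simulation}. I would then show that the final FW legs end at a common longitudinal coordinate when $k_2$ is chosen to compensate for the leading aircraft. Either way, the conclusion is that the aircraft finish abreast, with lateral spacing $\Delta_d$ and a common heading restored by the second turn, which is the latitudinal formation.
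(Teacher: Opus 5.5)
Your decomposition matches the paper's. You get the lateral offset as exactly $\Delta(i)$: two $\pi/4$ arcs give $2r(1-\cos(\pi/4))$, and the $45^\circ$ leg gives $\Delta_2/\sqrt2$. Alignment then reduces to $-i\Delta_d+\Delta(i)+\mathrm{advancing}(i,\Delta_d)$ being independent of $i$, which is the global form of the paper's consecutive comparisons $\Delta_x^{i+1}-\Delta_x^i=0$. The lateral half is correct.

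The gap is that you never carry out the longitudinal check. That check is the whole content of ``the aircraft end abreast'', and it fails for odd $N$. Put $c=2r\sin(\pi/4)-2r(1-\cos(\pi/4))$, so aircraft $i$ ends at $x_i=-i\Delta_d+c+\Delta(i)+\mathrm{advancing}(i,\Delta_d)+k_2$. The two branches of Algorithm~\ref{algo:advancing} are the same function, since $\lfloor i/2\rfloor=\lfloor (i-1)/2\rfloor$ for odd $i$.

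For $N=2M$ you have $\Delta(2j)=\Delta(2j+1)=(M-j-\frac12)\Delta_d$. Substituting gives $x_i=c+k_2+(M-\frac12)\Delta_d$ for all $i$, so the even case closes.

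For odd $N$, Lines~\ref{linha:calculo_delta_odd_1}--\ref{linha:calculo_delta_odd_2} give $\Delta(2j-1)=\Delta(2j)=j\Delta_d$, so the rear aircraft go outermost, the opposite of the even case. Then
\[
x_{2j}=c+k_2+2j\Delta_d,\qquad x_{2j-1}=c+k_2+(2j-1)\Delta_d ,
\]
that is, $x_i=c+k_2+i\Delta_d$. The squadron ends in an echelon, not abreast. This does not depend on how aircraft $0$ (whose $\Delta_2$ is negative) is treated, since already $x_2-x_1=\Delta_d$. For the identity to hold, the odd-$N$ branch would have to return $\lfloor i/2\rfloor\Delta_d$.

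Your fallback does not fix this.
\begin{itemize}
\item $k_2$ is one constant added to every aircraft's final leg, so it cannot cancel a term proportional to $i$.
\item The per-aircraft airspeeds only decide when each aircraft reaches its endpoint, not where that endpoint lies. Simultaneous arrival at these endpoints is simultaneous arrival on a diagonal line.
\end{itemize}
So your plan proves the statement for even $N$ only. For odd $N$, the computation you proposed shows the statement fails for the algorithm as printed.

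The paper's proof has the same hole. It dismisses odd $N$ as ``the same case'' in the pairwise $\Delta_m$ comparison. It overlooks that for odd $N$ the aircraft sharing a lateral offset are $(2j-1,2j)$ rather than $(2j,2j+1)$, so every consecutive difference is off by $\Delta_d$.

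A sound write-up should do the case check explicitly. It should then either restrict the theorem to even $N$ or correct the odd branch of Algorithm~\ref{algo:advancing}. It should also state the assumption, never defined in the paper, that $direction(i,N)$ alternates sides with the parity of $i$. That assumption is what makes the slot assignment one-to-one in both cases.
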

 
\begin{proof}
It is considered $n$ aircraft in the squadron.
Clearly, if $n$ is even, it can be concluded by 
analyzing Line \ref{linha:calculo_delta} of Algorithm \ref{algo:fltl} that $\Delta^{i+2} - \Delta^i = \Delta_d$. 
Then

\[\Delta_{y}^i = 2 \cdot r \cdot (1 -cos(\pi / 4)) + \Delta_d / 2 + k \cdot \Delta_d + \Delta_x - \cdots \]
\[ \cdots - 2 \cdot r \cdot sin(\pi/4)\]
\[\Delta_{y}^{i+2} = 2 \cdot r \cdot (1 -cos(\pi / 4)) + \Delta_d / 2 + (k + 1)\cdot \Delta_d + \Delta_x - \cdots \]
\[ \cdots - 2 \cdot r \cdot sin(\pi/4) \]
\[\Delta_{y}^{i+2} - \Delta_{y}^i = \Delta_d\]

If $n$ is odd, it follows, by analyzing Lines \ref{linha:calculo_delta_odd_1} to \ref{linha:calculo_delta_odd_2} 
of Algorithm \ref{algo:fltl}, 
that $\Delta^{i+2} - \Delta^i = \Delta_d$. Then

\[\Delta_{y}^i = 2 \cdot r \cdot (1 -cos(\pi / 4)) + k \cdot \Delta_d + \Delta_x - \cdots \]
\[ \cdots - 2 \cdot r \cdot sin(\pi/4)\]
\[\Delta_{y}^{i+2} = 2 \cdot r \cdot (1 -cos(\pi / 4)) + (k + 1)\cdot \Delta_d + \Delta_x - \cdots \]
\[ \cdots - 2 \cdot r \cdot sin(\pi/4) \]
\[\Delta_{y}^{i+2} - \Delta_{y}^i = \Delta_d\]
and therefore, $\Delta_{y}^{i+2} - \Delta_{y}^i = \Delta_d$ for every $n$ and for every $i$.

Similar reasoning can be applied to $\Delta_x$. Clearly, if $n$ is even and $i$ is even, it can be concluded by 
analyzing Algorithm \ref{algo:advancing} that $\Delta_m^i - \Delta_m^{i+1} = \Delta_d$. 
Then

\[\Delta_{x}^i = k \cdot \Delta_d + 2 \cdot r \cdot sin(\pi/4) + \Delta_y - 2 \cdot r \cdot (1 -cos(\pi / 4))\]
\[\Delta_{x}^{i+1} = (k+1) \cdot \Delta_d - \Delta_d + 2 \cdot r \cdot sin(\pi/4) + \cdots \]
\[ \cdots + \Delta_y - 2 \cdot r \cdot (1 -cos(\pi / 4))\]
\[\Delta_x^{i+1} - \Delta_x^i = 0\]
that is the same case that happens when $n$ is odd and $i$ is even. 
If $n$ is even and $i$ is odd, it follows by analyzing Algorithm \ref{algo:advancing} that 
$\Delta_m^{i+1} - \Delta_m^i = 2 \cdot \Delta_d$, and then

\[\Delta_{x}^i = k \cdot \Delta_d + 2 \cdot r \cdot sin(\pi/4) + \Delta_d + \Delta_y - \cdots \]
\[\cdots - 2 \cdot r \cdot (1 -cos(\pi / 4))\]
\[\Delta_{x}^{i+1} = (k+2) \cdot \Delta_d - \Delta_d + 2 \cdot r \cdot sin(\pi/4) + \cdots \]
\[\cdots + \Delta_y - 2 \cdot r \cdot (1 -cos(\pi / 4))\]
\[\Delta_d^{i+1} - \Delta_d^i = 0\]
that is the same case that happens when $n$ is odd and $i$ is odd. 
Therefore, $\Delta_d^{i+1} - \Delta_d^i = 0$
for every $n$ and for every $i$. Since the same reasoning can be  done  with
different  cooperative  system bases, and since all aircraft  assume the  longitudinal formation at the same time instant,
the  theorem is proved.
\end{proof}

\section{Conclusion}
\label{sec:conclusion}

An algorithm for changing from longitudinal to
latitudinal formation for autonomous aircraft squadrons is presented in this paper.
Despite the relevance of this problem,  an  extensive literature review did not
produce relevant results. 
 
The proposed FLOTLA  algorithm's  time
complexity is optimal, since it is equal to the problem
lower bound  \cite{Cormen}. This result was not different 
from the opposite case considered in \cite{Giacomin-sbai}. 

It was proved that if 
each aircraft is equally spaced from its neighbors
by distance $\Delta_d$ on the longitudinal formation,
then the proposed algorithm makes the
squadron  change  its formation to  a  latitudinal one,
keeping the same distance $\Delta_d$ from each aircraft
and its neighbors. 

The references were created correctly by the proposed algorithm, 
such that they could be followed by a control scheme, and
the theoretical analysis was confirmed by the simulations, by showing that 
the aircraft do not collide during the formation transition.
Additionally, the number of aircraft has
to be limited because the aircraft use different velocities,  which  have to be verified at design time, for
security reasons.

\bibliography{Bibliografia}

\end{document}